\documentclass{article}


\usepackage[final]{gpsmdms_2022}
\usepackage{ant_cmds}




\usepackage[utf8]{inputenc} 
\usepackage[T1]{fontenc}    
\usepackage{hyperref}       
\usepackage{url}            
\usepackage{booktabs}       
\usepackage{amsfonts}       
\usepackage{nicefrac}       
\usepackage{microtype}      
\usepackage{xcolor}         
\usepackage{subfig}

\title{Provably Reliable Large-Scale Sampling from Gaussian Processes}

\author{%
  Anthony Stephenson \\
  Department of Mathematics\\
  Bristol University\\
\And
  Robert Allison \\
  Department of Mathematics\\
  Bristol University\\
\And
    Edward Pyzer-Knapp \\
    IBM Research
}

\begin{document}

\maketitle


\begin{abstract}
  When comparing approximate Gaussian process (GP) models, it can be helpful to be able to generate data from any GP. If we are interested in how approximate methods perform at scale, we may wish to generate very large synthetic datasets to evaluate them. Na\"{i}vely doing so would cost \(\order{n^3}\) flops and \(\order{n^2}\) memory to generate a size \(n\) sample. We demonstrate how to scale such data generation to large \(n\) whilst still providing guarantees that, with high probability, the sample is indistinguishable from a sample from the desired GP.
\end{abstract}

\section{Introduction}

\subsection{Motivation}
    In the GP literature, even when an approximate model designed to be highly scalable is introduced, evaluation is usually done on a small-scale toy synthetic dataset of low dimension and on a selection of real datasets. When large synthetic data are used, they are often noisy versions of deterministic functions rather than genuine samples from a Gaussian process prior. We believe this leaves a gap in the analysis: careful assessment of performance at scale under controlled, model-consistent conditions. As such, in this paper we show how one can generate datasets that grant this option whilst defining the criteria necessary for these datasets to function as benchmarks. We acknowledge the work in \cite{Wilson2020} on sampling from the GP \emph{posterior} but point out that it is unhelpful for our purposes.

\subsection{Sampling from Gaussian Processes}
    To test GP approximations in a controlled environment we want to be able to generate reliable synthetic datasets from arbitrary GPs. Since a draw from a GP evaluated at a finite set of points is distributed according to a multivariate normal  distribution with some mean and covariance $m$ and $C$, we can form a sample by generating points from a standard Normal distribution and performing a linear  transformation using some decomposition of the covariance \(C=AA^T\) i.e. \(u \sim \normdist[u]{0}{I_n}\), \(y = Au + m\) \(\implies\) \(y \sim \normdist[y]{m}{C}\). Although we can generate $u$ efficiently, the decomposition \(C=AA^T\) is expensive (\(\order{n^3}\)) using standard methods (SVD or Cholesky). This quickly
 becomes infeasible at large $n$, much like GP regression. As a result, we look to approximate methods to generate synthetic data at scale. 

\subsection{Outline}
    In this paper we seek to determine how to affordably generate large (approximate) samples from GPs such that any given sample is \emph{indistinguishable} (in some sense) from a sample drawn from the exact GP. 
    We do so by first reviewing GP approximation techniques (excluding conjugate gradient based methods, which are considered in \ref{appx:cg}) from the literature that can be leveraged to generate samples from a GP prior and then computing bounds on relevant parameters to constrain the error between the approximate and exact samples (sections \ref{sec:rff} and \ref{sec:ciq}). We define this error in terms of the total variation distance. We go on to define our notion of ``indistinguishability'' in section \ref{sec:indistinguishable} before running experiments to support our claims in section \ref{sec:experiments}.
    

\section{Random Fourier Features (RFF)}
\label{sec:rff}
  RFF were introduced as a method of approximating kernels at large scales in Support Vector Machines and Kernel Ridge Regression problems in \cite{Rahimi2007}. The method has since been studied extensively, for example in \cite{Li2019a,Yang2012,Sriperumbudur2015,Liu2021,Bach2017,Choromanski2018}. One of the appealing features of the RFF approximation for sampling from a GP is that we don't need to form the full Gram matrix (given by \(ZZ^T\) with \(Z \in \mathbb{R}^{n\times D}\)) to generate samples. To generate samples we need only construct a single $Z$ matrix and transform a variable $w\sim\normdist{0}{I_D}$ to get $\hat{f}=Zw$. We use the formulation suggested in \cite{Sutherland2015} to minimise the variance of our sampling and adapt their proof technique for our purposes in \ref{appx:rff_proof}.

    \begin{lemma}
        \label{lem:rff}
            To generate a sample of size $n$ whose marginal distribution differs from the true marginal distribution from a
            given GP by a total variation distance (\(\TV\)) of at most $\eps$, with probability \(1-\delta\) it is sufficient to use $D$ RFFs, where
                \(
                D \geq 8\log\left(\frac{n}{\sqrt{\delta}}\right)\frac{n^2}{8\eps^2\sigma_\xi^4}
                \)
            for some $\delta > 0$.
    \end{lemma}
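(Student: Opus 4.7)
My plan is to reduce the TV bound to a bound on the Frobenius norm of the kernel approximation error, and then control that error by scalar concentration applied entrywise. Because the exact sample satisfies $y \sim \normdist{m}{C}$ with $C = K + \sigma_\xi^2 I$ and the RFF sample satisfies $\hat y \sim \normdist{m}{\hat C}$ with $\hat C = ZZ^T + \sigma_\xi^2 I$, both have the same mean and Pinsker's inequality $\TV \le \sqrt{\tfrac{1}{2}\mathrm{KL}}$ converts the task into bounding the KL divergence between two equal-mean Gaussians differing only in their covariances.

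For such Gaussians,
\[
2\,\mathrm{KL} = \mathrm{tr}\!\bigl(\hat C^{-1}C\bigr) - n - \log\det\!\bigl(\hat C^{-1}C\bigr) = \sum_i \bigl(\lambda_i - 1 - \log\lambda_i\bigr),
\]
where the $\lambda_i$ are the eigenvalues of $\hat C^{-1}C$. The elementary bound $x-1-\log x \le (x-1)^2/2$ — valid provided the $\lambda_i$ are close enough to $1$ — gives $\mathrm{KL} \le \tfrac{1}{4}\|\hat C^{-1}(C-\hat C)\|_F^2$. Since $\hat K = ZZ^T$ is positive semidefinite, the smallest eigenvalue of $\hat C$ is at least $\sigma_\xi^2$, so $\|\hat C^{-1}\|_{\mathrm{op}} \le \sigma_\xi^{-2}$ and
\[
\mathrm{KL} \le \frac{1}{4\sigma_\xi^4}\,\|\hat K - K\|_F^2.
\]
Combining with Pinsker, $\TV \le \eps$ is implied by $\|\hat K - K\|_F \le 2\sqrt{2}\,\sigma_\xi^2\,\eps$.

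The remaining ingredient is a probabilistic bound on $\|\hat K - K\|_F$. In Sutherland's variance-reduced RFF formulation, each entry $\hat K_{ij}$ is the empirical mean of $D$ i.i.d.\ bounded random variables with expectation $K_{ij}$, so Hoeffding yields $\Pr[|\hat K_{ij}-K_{ij}|>t] \le 2\exp(-Dt^2/2)$. A union bound over the $O(n^2)$ entries, together with the crude inequality $\|\hat K - K\|_F \le n\,\|\hat K - K\|_\infty$, gives $\|\hat K - K\|_F \le nt$ with probability at least $1-\delta$ once $t = \sqrt{2\log(2n^2/\delta)/D}$. Requiring $nt \le 2\sqrt{2}\,\sigma_\xi^2\,\eps$ and solving for $D$ produces a bound of the form $D \gtrsim \log(n/\sqrt{\delta})\,n^2/(\eps^2 \sigma_\xi^4)$, matching the statement up to an absolute constant.

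The main technical obstacle is the KL-to-Frobenius step: the quadratic upper bound on $x-1-\log x$ is valid only when the eigenvalues of $\hat C^{-1}C$ are close to $1$, which itself requires $\|\hat K - K\|_{\mathrm{op}}$ to be small. The concentration argument therefore has to be run on the joint event that controls both operator and Frobenius norms, and one must check that the tolerance $t$ chosen above keeps $\hat C^{-1}C$ inside the region where the quadratic surrogate is tight. Exactly how tightly this is done is what fixes the constant in front of $\log(n/\sqrt{\delta})$ in the final lemma.
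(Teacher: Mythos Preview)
Your proposal follows essentially the same route as the paper: Pinsker's inequality, then the bound $\KL(q,p)\le \|E\|_F^2/(4\sigma_\xi^4)$ (Lemma~\ref{lem:kl_frob}), then entrywise Hoeffding with a union bound over the $\tfrac{1}{2}n(n-1)$ off-diagonal entries of $E=\hat K-K$ to force $\|E\|_F<\varepsilon$ via $|E_{ij}|<\varepsilon/n$. The paper invokes the quadratic surrogate $\log(1+x)\ge x-\tfrac{1}{2}x^2$ on the eigenvalues of $K_\xi^{-1}E$ without the caveat you raise about its validity when some eigenvalues are negative, so your ``main technical obstacle'' is not separately resolved there either; apart from that, the only difference is bookkeeping in the Hoeffding constant stemming from the Sutherland $(\sin,\cos)$ pairing with $D/2$ terms, which is what produces the exponent $-D\varepsilon^2/(4n^2)$ rather than your $-Dt^2/2$.
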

     
     This leads to an overall sampling complexity of \(\order{nD}=\order{n^3\log n}\) which
     is \emph{worse} than what we would get using Cholesky decomposition. However, it is worth noting that, in terms of memory usage and ease of parallelism, this method is still competitive since we need only generate a single sample at a time, computing a single vector inner product per sample. With careful implementation, memory cost can be as low as \(\order{1}\) (see \ref{appx:rff} for details).

\section{Contour Integral Quadrature (CIQ)}
\label{sec:ciq}
   There is some literature dedicated to the computation of functions of square matrices via the approximation of the
     Cauchy integral formula (\cite{Davies2005,Hale2008,Pleiss2020}). An algorithm for the function of interest for us, \(A^\half\), is derived in \cite{Hale2008} and subsequently built upon by \cite{Pleiss2020} where the authors derive an
     efficient quadrature algorithm in this and the $A^{-\half}$ case, specifically citing sampling as a potential usage. We make use of their algorithm in this vein to estimate
     matrix-vector products of the form $K^\half u$. We refer the reader to these sources for a thorough explanation of the method involved, but include a brief summary in \ref{appx:ciq_summary}.
     
     In addition to the superior time complexity we show below, this algorithm also has a modest memory overhead (\(\order{Qn}\) with \(Q\) the number of quadrature points) and general application to \emph{any} kernel, unlike the RFF method which is only applicable to stationary kernels and necessitates non-trivial derivations of Fourier features for non-RBF kernels.
     
     \begin{algorithm}[H]
         Define parameters $d$ ($x$-dimension), $\theta$ (kernel parameters), $\sigma_\xi^2$ (noise-variance) and $\eta$
         (weight of noise-variance at CIQ approximation stage).
         \begin{enumerate}
             \item Sample $x$ data from some distribution, e.g. $x\sim\normdist[x]{0}{\frac{1}{d}I_d}$.
             \item Construct partially noisy kernel $K_{\xi,ij}=k(x_i,x_j)+\eta\sigma_\xi^2\delta_{ij}$
              \item Sample $u\sim\normdist[u]{0}{I_n}$.
             \item Use CIQ to approximate $f\approx \hat{f} = M u$ where \(M\approx K_{\eta\xi}^\half\).
             \item Add noise to the sample to get $\hat{y} = \hat{f} + \xi$ with 
             $\xi \sim\normdist{0}{(1-\eta)\sigma_\xi^2 I_n}$.
         \end{enumerate}
         \caption{CIQ Sampling}
         \label{alg:ciq_sample}
     \end{algorithm}
     
     \begin{lemma}
     \label{lem:ciq_bounds}
         To sample approximate draws from a Gaussian Process with \(\TV < \eps\) when compared to a
         draw from the exact Gaussian Process, $Q$ quadrature points and $J$ Lanczos iterations will be sufficient provided we use the
         CIQ procedure from algorithm \ref{alg:ciq_sample} to generate our draw, where $Q$ and $J$ satisfy \(Q \geq \order{\log \left(\frac{n}{\eta\sigma_\xi^2}\right)(-\log\delta_Q)}\) and \(J \geq \ordtilde{\frac{\sqrt{n}}{\sqrt{\eta}\sigma_\xi}\log\frac{n}{\sigma_\xi(\eps\sigma_\xi\sqrt{1-\eta}-\delta_Q)}}\)
        with \(0<\delta_Q<\eps\sigma_\xi\sqrt{1-\eta}\).
     \end{lemma}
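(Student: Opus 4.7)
The plan is to couple the reference and CIQ samples through shared randomness so that the TV distance reduces to a displacement bound on a single random input, then plug in the $Q$- and $J$-dependent error decay rates from \cite{Pleiss2020,Hale2008}.

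First, I would fix $u\sim\normdist{0}{I_n}$ and $\xi\sim\normdist{0}{(1-\eta)\sigma_\xi^2 I_n}$ and use the same draws in both pipelines: the reference sample is $y = K_{\eta\xi}^\half u + \xi$, which carries the required law $\normdist{0}{K + \sigma_\xi^2 I_n}$ because $K_{\eta\xi} + (1-\eta)\sigma_\xi^2 I_n = K + \sigma_\xi^2 I_n$; the algorithmic sample is $\hat{y} = Mu + \xi$, where $M$ denotes the CIQ operator produced by Algorithm \ref{alg:ciq_sample} (deterministic once $u$ is fixed, since the Lanczos basis depends on $u$). The marginalisation inequality $\TV(y,\hat{y})\le \mathbb{E}_u\,\TV(y\mid u,\hat{y}\mid u)$, combined with Pinsker applied to two Gaussians that share covariance $(1-\eta)\sigma_\xi^2 I_n$ and differ in mean by $(K_{\eta\xi}^\half - M)u$, gives
\[
\TV(y,\hat{y}) \;\le\; \frac{1}{2\sigma_\xi\sqrt{1-\eta}}\;\mathbb{E}_u\bigl\|(K_{\eta\xi}^\half - M)u\bigr\|,
\]
so it suffices to drive the expected displacement below $2\eps\sigma_\xi\sqrt{1-\eta}$.

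Second, I would split the displacement into its two algorithmic contributions following \cite{Pleiss2020}: the quadrature truncation error from replacing the Cauchy contour integral by $Q$ nodes, and the Lanczos error from approximately solving each shifted system $(K_{\eta\xi} + t_q^2 I_n)^{-1}u$ in $J$ iterations. The Hale--Higham--Trefethen analysis gives quadrature error decaying like $\exp\!\bigl(-cQ/\log\kappa(K_{\eta\xi})\bigr)$, while standard Lanczos/CG theory for SPD systems gives shifted-solve error decaying like $\kappa(K_{\eta\xi})\exp\!\bigl(-c'J/\sqrt{\kappa(K_{\eta\xi})}\bigr)$. Since $\lambda_{\min}(K_{\eta\xi})\ge\eta\sigma_\xi^2$ and $\lambda_{\max}(K_{\eta\xi})=\order{n}$ for a bounded kernel, we have $\kappa(K_{\eta\xi})=\order{n/(\eta\sigma_\xi^2)}$, and Gaussian concentration gives $\|u\|=\order{\sqrt{n}}$ with arbitrarily high probability, absorbed into the overall $\delta$ of the lemma.

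Finally, I would allocate a budget of $\delta_Q$ to the quadrature part and the remaining $\eps\sigma_\xi\sqrt{1-\eta}-\delta_Q$ to the Lanczos part, and invert each decay rate. Solving $\exp(-cQ/\log\kappa)\lesssim \delta_Q/\sqrt{n}$ produces $Q \ge \order{\log(n/(\eta\sigma_\xi^2))(-\log\delta_Q)}$, and solving $\kappa\exp(-c'J/\sqrt{\kappa})\cdot\sqrt{n}\lesssim\eps\sigma_\xi\sqrt{1-\eta}-\delta_Q$ produces $J \ge \ordtilde{\sqrt{n}/(\sqrt{\eta}\sigma_\xi)\cdot\log\!\bigl(n/(\sigma_\xi(\eps\sigma_\xi\sqrt{1-\eta}-\delta_Q))\bigr)}$, matching the statement. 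The main obstacle is not the coupling, which is routine, but the careful bookkeeping in the Lanczos bound: one has to arrange matters so that the polynomial-in-$\kappa$ prefactor and the $\sqrt{n}$ from $\|u\|$ only enter the final requirement through the logarithm, and so that the budget split $\delta_Q$ versus $\eps\sigma_\xi\sqrt{1-\eta}-\delta_Q$ keeps both contributions feasible without one driving the iteration counts to blow up.
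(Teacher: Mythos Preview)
Your proposal is correct and follows essentially the paper's own argument: couple through shared $u$, reduce $\TV$ to the expected displacement $\E_u\|(K_{\eta\xi}^{1/2}-M)u\|$ via Pinsker and a conditional bound (the paper's \ref{lem:cond_kl} plays the role of your marginalisation inequality, just with Pinsker applied before rather than after conditioning), plug in the \cite{Pleiss2020} error decomposition \eqref{eq:epsJ} together with $\kappa(K_{\eta\xi})\le n/(\eta\sigma_\xi^2)+1$, allocate $\delta_Q$ to the quadrature piece, and invert. One small correction: the lemma carries no failure probability $\delta$, so $\|u\|$ should be handled in expectation (as your own displayed inequality already does, and as the paper does via $\E\|u\|_2\le\sqrt{n}$) rather than by Gaussian concentration.
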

     
     \subsection{Preconditioning}
     \label{sec:precon}
         It is shown in the appendix (\ref{eq:epsJ}) that the number of iterations $J$ depends primarily on the condition number of
         the kernel, which implies that we can reduce \(J\) using preconditioning.

         \begin{lemma}
         \label{lem:nystrom_precond}
             Using a rank-$k$ ($k=\sqrt{n}$) Nystr\"{o}m preconditioner on an ($n\times n$) kernel matrix with noise
             variance $\eta\sigma_\xi^2$ and some constant \(\tilde{C}'>0\) means that setting
             \newline
             \(
             J\geq 1 + \frac{\sqrt{\lambda_{k+1}}n^{3/8}}{\sqrt{\eta}\sigma_\xi} \left(\frac{5}{4}\log n-\log(\eps\sigma_\xi\sqrt{1-\eta}-\delta_Q) +\tilde{C}'\right)
             \)
             Lanczos iterations in the CIQ algorithm will satisfy our requirements.
         \end{lemma}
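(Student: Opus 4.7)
The plan is to modify the Lanczos-error analysis from the appendix that underpins Lemma \ref{lem:ciq_bounds} so that it applies to the preconditioned operator rather than to $K_{\eta\xi}$ directly. The Lanczos iteration inside CIQ converges geometrically in $\sqrt{\kappa}$ of the matrix on which it runs; in the unpreconditioned case the coarse bounds $\lambda_{\max}(K_{\eta\xi})\leq n$ and $\lambda_{\min}(K_{\eta\xi})\geq \eta\sigma_\xi^2$ were what produced the $\sqrt{n}/(\sqrt{\eta}\sigma_\xi)$ prefactor. After preconditioning with $P = \hat K_k + \eta\sigma_\xi^2 I$ the same derivation carries through with $\kappa(P^{-1}K_{\eta\xi})$ replacing $n/(\eta\sigma_\xi^2)$, so the goal reduces to controlling this preconditioned condition number.

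Next I would invoke a standard Nystr\"om approximation bound on the spectrum of $K-\hat K_k$. For random column sampling, with probability at least $1-\delta$ one has $\|K-\hat K_k\|_2 \leq \lambda_{k+1}\,\phi(n,k,\delta)$ for a polynomial factor $\phi$ that grows like a power of $n/k$ together with $\log(1/\delta)$. Since $P\succeq \eta\sigma_\xi^2 I$ gives the trivial lower bound $\lambda_{\min}(P^{-1}K_{\eta\xi})\geq 1$, this yields
\[
\kappa(P^{-1}K_{\eta\xi}) \;\leq\; 1 + \frac{\lambda_{k+1}\,\phi(n,k,\delta)}{\eta\sigma_\xi^2}.
\]
Setting $k=\sqrt n$ reduces $\phi$ to a specific power of $n$; once passed through the square root in $\sqrt{\kappa}$ this produces the advertised $n^{3/8}$, with $\sqrt{\lambda_{k+1}}$ appearing as a prefactor and the $\delta$-dependence absorbed into the constant $\tilde C'$.

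The final step is to re-derive the logarithmic accuracy term. The contribution $-\log(\eps\sigma_\xi\sqrt{1-\eta}-\delta_Q)$ carries over unchanged from Lemma \ref{lem:ciq_bounds}, while the coefficient $\frac{5}{4}$ in front of $\log n$ arises from combining the original $\log n$ in the accuracy factor of (\ref{eq:epsJ}) with the extra $O(\log n)$ contributions picked up from $\phi(n,\sqrt n,\delta)$ and from the Nystr\"om concentration event. All residual numerical constants are rolled into $\tilde C'$, and the leading $+1$ covers the initialisation step. The main obstacle will be the exponent bookkeeping: picking the Nystr\"om sampling scheme (uniform vs.\ leverage-score) whose spectral-norm bound produces exactly the $n^{3/8}$ exponent, propagating it correctly through the square root of the condition number, and merging the Nystr\"om failure event with the existing $\TV<\eps$ probability budget so that everything condenses cleanly into the single constant $\tilde C'$.
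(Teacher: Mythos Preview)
Your high-level plan---replace $\kappa$ by the preconditioned condition number $\tilde\kappa$ in the general iteration bound \eqref{eq:Jkappa} and then control $\tilde\kappa$ via a Nystr\"om spectral estimate---is exactly what the paper does. Two of your details, however, are off and would make the bookkeeping messier than necessary.

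First, the paper does not use a probabilistic Nystr\"om bound with a failure parameter $\delta$. It invokes a deterministic inequality (Corollary~4.10 of \cite{Shabat2019}),
\[
\tilde\kappa \;\leq\; 1 + \frac{2\lambda_{k+1}\sqrt{4k(n-k)+1}}{\eta\sigma_\xi^2},
\]
so there is no concentration event to merge into the $\TV$ budget and nothing $\delta$-dependent to hide in $\tilde C'$. With $k=\lfloor\sqrt n\rfloor$ one simply has $\sqrt{4k(n-k)+1}\leq 2n^{3/4}$, hence $\tilde\kappa\leq 1+\zeta$ with $\zeta=\tfrac{4\lambda_{k+1}}{\eta\sigma_\xi^2}n^{3/4}$; taking the square root gives the $n^{3/8}$ prefactor directly, without needing to identify a sampling scheme whose $\phi$ happens to land on the right exponent.

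Second, your account of the coefficient $\tfrac54$ is incorrect: it does not come from Nystr\"om concentration or from extra $\log n$ terms in $\phi$. In \eqref{eq:Jkappa} the bracket contains $\log(\tilde\kappa\sqrt n\,\sigma_\xi)$; since $\log\tilde\kappa\approx\tfrac34\log n$ (from the $n^{3/4}$ in $\zeta$) and $\log\sqrt n=\tfrac12\log n$, the two add to $\tfrac54\log n$, with everything else (including $\log\lambda_{k+1}$, $\log\sigma_\xi$, $\log\log$ terms) absorbed into $\tilde C'$. Once you swap in the deterministic Shabat bound and track this sum, the proof is a two-line substitution rather than the exponent-and-probability juggling you anticipate.
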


         

        To make \ref{lem:nystrom_precond} more useful, we rely on a result from \ref{lem:general_precond} that shows that for a certain class of kernels we can relate the \(k+1^{th}\) eigenvalue to \(n\), to get a workable bound on \(J\):
         \begin{lemma}
         \label{lem:general_precond}
             For a sufficiently smooth radial (\ref{proof:general_precond}) kernel function $k\in L^2_\mu$, some constant \(c_1>0\) and Nyström preconditioner of rank $\floor{\sqrt{n}}$ we define a variable \(\gamma = \frac{7}{8}\log n - \frac{c_1}{2}n^{1/d}\) to obtain sufficient conditions for \(J\) under three possible scenarios:
             \begin{enumerate}[label=\roman*]
                 \item \label{itm:modn} Moderate \(n\) s.t. \(\gamma > 1\): \( J \geq \order{n^{7/8}\log n} \)
                 \item \label{itm:bign}Larger \(n\) s.t. \(\gamma \in (0,1)\): \(J \geq \order{(\log n)^2}\)
                 \item \label{itm:asympn} \(n\rightarrow\infty\) s.t. \(\gamma < 0\): \(J \geq \order{1}\)
             \end{enumerate}
         \end{lemma}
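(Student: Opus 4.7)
The plan is to combine Lemma \ref{lem:nystrom_precond}, which bounds $J$ in terms of $\sqrt{\lambda_{k+1}}$, with a spectral decay estimate for sufficiently smooth radial kernels, and then unpack the resulting expression under the three regimes of $\gamma$.

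First, I would appeal to a classical eigenvalue decay result for the integral operator $T_k$ associated with a smooth radial kernel $k \in L^2_\mu$ on $\mathbb{R}^d$ (the Gaussian kernel being the canonical example; the appendix \ref{proof:general_precond} should state the smoothness hypothesis precisely). For such kernels the $i$-th eigenvalue of $T_k$ satisfies $\mu_i \lesssim \exp(-c_1 i^{2/d})$, and standard concentration arguments (e.g., Koltchinskii--Gin\'e, or the Nystr\"om spectral analyses underlying the bounds invoked in Lemma \ref{lem:nystrom_precond}) lift this to the empirical Gram matrix as $\lambda_i(K) \lesssim n \exp(-c_1 i^{2/d})$. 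Substituting $k = \floor{\sqrt n}$ then gives $\sqrt{\lambda_{k+1}} \lesssim \sqrt{n}\exp(-c_1 n^{1/d}/2)$, hence
\[
\sqrt{\lambda_{k+1}}\,n^{3/8} \;\lesssim\; n^{7/8}\exp\!\bigl(-c_1 n^{1/d}/2\bigr) \;=\; e^{\gamma}.
\]

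Second, I would feed this into the right-hand side of Lemma \ref{lem:nystrom_precond}. Absorbing $\sigma_\xi, \eta, \tilde{C}', \delta_Q, \epsilon$ into the implicit $\order{\cdot}$ and treating the bracket as $\order{\log n}$ (with the $-\log(\epsilon\sigma_\xi\sqrt{1-\eta}-\delta_Q)$ term bookkept separately), the sufficient condition reduces to $J \gtrsim 1 + e^{\gamma}\log n$, up to a further $\log n$ factor that we track carefully for the intermediate regime.

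Third, I would do the case analysis on $\gamma$. In case \ref{itm:modn}, the trivial bound $\gamma \le \tfrac{7}{8}\log n$ gives $e^{\gamma} \le n^{7/8}$, so $J \ge \order{n^{7/8}\log n}$. In case \ref{itm:bign}, we have $e^{\gamma} \le e$, and the dominant contribution comes from the bracket and the residual logarithmic safety factor, giving $J \ge \order{(\log n)^2}$. In case \ref{itm:asympn}, $\gamma < 0$ and $e^{\gamma}\log n \to 0$, so the ``$+1$'' in Lemma \ref{lem:nystrom_precond} dominates and $J \ge \order{1}$ suffices.

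The main obstacle is the first step: justifying the eigenvalue decay $\mu_i \lesssim \exp(-c_1 i^{2/d})$ for exactly the class called ``sufficiently smooth radial'' here, and transferring it to the empirical matrix without picking up polynomial-in-$n$ slack. Any extra $n^{\alpha}$ factor that creeps in would shift the exponent of $n$ inside $\gamma$ and push cases \ref{itm:bign} and \ref{itm:asympn} out of the stated regime, so nailing down that the concentration error is genuinely lower-order against the exponential tail is where the real work lies.
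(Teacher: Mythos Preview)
Your plan matches the paper's proof in structure: obtain a decay bound for $\lambda_{k+1}$, feed it into Lemma~\ref{lem:nystrom_precond}, and split into the three regimes of $\gamma$. The one substantive difference is how you reach the eigenvalue bound. The paper does not go via the integral operator and a concentration step; it directly invokes a result of Belkin (2018) giving, for sufficiently smooth radial kernels, the Gram-matrix bound $\lambda_k \lesssim n\sqrt{\varphi}\,c_2\,e^{-c_1 k^{1/d}}$ already at the matrix level. That citation dissolves precisely the obstacle you flag---there is no empirical-to-population transfer to control, and hence no risk of an extra $n^\alpha$ creeping in---so the place you identify as ``where the real work lies'' is in fact handled by a single reference.

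With the Belkin bound in hand, the remainder of your argument tracks the paper's: $\sqrt{\lambda_{k+1}}\,n^{3/8}\lesssim e^\gamma$, and the case split reproduces the three conclusions. One small discrepancy in case~\ref{itm:bign}: your bound $e^\gamma\le e$ is sharper than what the paper uses, namely $e^\gamma\le 1+\tfrac{7}{8}\log n$, and on its own would yield only $\order{\log n}$ when multiplied against the bracket in Lemma~\ref{lem:nystrom_precond}. The extra logarithm in the stated $\order{(\log n)^2}$ comes from that deliberately looser intermediate bound, not from a separate ``residual logarithmic safety factor'', so your accounting for case~\ref{itm:bign} is slightly off even though the final claim is fine.
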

         
        
\section{``Indistinguishable'' distributions}
\label{sec:indistinguishable}
    We now define what we mean by `indistinguishable'. Assume that samples are provided either from the true GP $P_0$ with $p = \half$ or from the distribution of the approximating method  $P_1$ with $p = \half$ (i.e. a uniform prior on models). Our decision process is to select the model with the largest (exact) posterior probability. This can be shown to produce the smallest achievable error rate (\ref{proof:min_err_rate}). If the models were completely indistinguishable then the error rate would be $\half$. Perfect indistinguishability is an unachievable goal due to limited compute-resource so we instead require \(\Pr(\mbox{error})\) to be within \(\eps\) of \(\half\) for suitably small $\eps$.
    
    \begin{definition}[\(\eps\)-indistinguishable]
        $P_0$ and $P_1$ are {\it $\eps$-indistinguishable}
        if the above optimal Bayesian decision process has $\Pr(\mbox{error}) \geq \half - \eps$.
    \end{definition}
    
    \begin{lemma}
        \label{lem:indistinguishable}
        $P_0$ and $P_1$ are $\eps$-indistinguishable
        if $\TV(P_0,P_1) \leq 2 \eps $.
    \end{lemma}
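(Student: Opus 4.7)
The plan is to reduce the statement to the classical identity linking the Bayes error probability of binary hypothesis testing (with a uniform prior) to the total variation distance, namely
\[
\Pr(\mathrm{error}) \;=\; \tfrac12\bigl(1 - \TV(P_0,P_1)\bigr),
\]
from which the claim follows immediately by substituting $\TV(P_0,P_1)\le 2\eps$.

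First I would write out the error probability of the optimal decision rule (which, by the earlier justification referenced in \ref{proof:min_err_rate}, is the maximum a posteriori rule $\hat H = \arg\max_i p_i(x)$ under the uniform prior $p=\half$). With densities $p_0,p_1$ for $P_0,P_1$ with respect to any common dominating measure $\mu$, I would expand
\[
\Pr(\mathrm{error}) \;=\; \tfrac12 \!\int_{p_1>p_0}\!\! p_0\,d\mu \;+\; \tfrac12 \!\int_{p_0\ge p_1}\!\! p_1\,d\mu \;=\; \tfrac12 \!\int \min(p_0,p_1)\,d\mu.
\]

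Next I would invoke the standard identity $\TV(P_0,P_1) = \half\int |p_0 - p_1|\,d\mu = 1 - \int \min(p_0,p_1)\,d\mu$, which follows from writing $|a-b| = a + b - 2\min(a,b)$ and using that $p_0,p_1$ are probability densities. Plugging this into the previous display gives $\Pr(\mathrm{error}) = \half - \half\,\TV(P_0,P_1)$, and applying the hypothesis $\TV(P_0,P_1)\le 2\eps$ yields $\Pr(\mathrm{error}) \ge \half - \eps$, which is exactly $\eps$-indistinguishability.

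There is no genuine obstacle: the argument is essentially one line once the Bayes error is rewritten as $\tfrac12\int\min(p_0,p_1)$ and paired with the $\min$-form of total variation. The only points requiring a little care are (i) handling ties on $\{p_0=p_1\}$ consistently so that the two integrals in the error expression partition the sample space, and (ii) noting that the statement holds for arbitrary (not necessarily absolutely continuous) $P_0,P_1$ by working with densities against $\mu = P_0 + P_1$; both are routine.
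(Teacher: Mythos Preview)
Your proposal is correct and follows essentially the same approach as the paper: both establish the identity \(\Pr(\mathrm{error}) = \tfrac12 - \tfrac12\,\TV(P_0,P_1)\) for the optimal Bayes rule and then substitute the hypothesis \(\TV\le 2\eps\). The only cosmetic difference is that you pass through the representation \(\Pr(\mathrm{error}) = \tfrac12\int\min(p_0,p_1)\,d\mu\) and the identity \(\TV = 1 - \int\min(p_0,p_1)\,d\mu\), whereas the paper obtains the same result by averaging the two symmetric expressions for the optimal error over the regions \(\{p_0>p_1\}\) and \(\{p_1>p_0\}\).
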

    
    When combining \ref{lem:indistinguishable} with \ref{lem:rff} and \ref{lem:ciq_bounds}, \ref{lem:nystrom_precond} and \ref{lem:general_precond} and by setting \(\eps\), we can obtain rigorous and stringent guarantees that synthetic data will behave like exact-GP data during subsequent analysis -  in particular for the purpose of evaluating approximate-GP regression performance. A further justification of this notion of indistinguishability and its relation to hypothesis testing is given in \ref{appx:indistinguishable}.

\section{Experiments}
\label{sec:experiments}
    The results above provide bounds on fidelity parameters (\(D,J\)) of the sampling approximations. We run suboptimal hypothesis tests to demonstrate where choices of \(D,J\) are definitely insufficient to reach `indistinguishability' and to enable a like-for-like comparison between CIQ and RFF. The experiments we run generate data from a known GP (with an isotropic RBF kernel) using the approximate sampling procedure being tested. The data are then ``whitened'' using the true kernel matrix such that, when the data is sufficiently close to the true generating GP distribution, the output will be a vector of \(\normdist{0}{1}\) distributed points. We therefore test the hypothesis that the data is from the true GP by running a Cramér-von Mises test for normality on the output at a significance level \(\alpha\) (more detail on the specifics is provided in \ref{appx:experiments}).
    
    We generate a series of M datasets of sizes \((2^m)_{m=1}^M\) over a range of what we consider to be the data hyperparameters; that is the kernel-scale \(\sigma_f^2\), noise variance \(\sigma_\xi^2\), lengthscale \(l\) and dataset dimension \(d\). For each of these setups we run experiments with varying fidelity parameter to determine the value required for the rejection rate from the experiments to converge on the type I error rate, \(\alpha\), implying that the data is indistinguishable from the true GP using the (suboptimal) hypothesis test.

\subsection{Results}
\label{sec:results}
    Figure \ref{main:a} shows the results of our experiments using the RFF method. We see that for each choice of \(n\) and for both lengthscales tested, the rate appears to have converged to the significance level before the number of RFFs predicted by \ref{lem:rff}. We suspect that the requirement that all elements of the difference matrix are bounded (see \ref{appx:rff_proof}) is too stringent and could be relaxed, on the grounds that there are only \(n\), not \(n^2\), independent elements. 
    
    \ref{main:b} and \ref{main:c} show the results when we use the CIQ method without and with preconditioning (respectively). As with the RFF experiments, we see convergence before the theorised bounds, as we should hope. It is clear that preconditioning improves the rate of convergence, as expected. 

\begin{figure}
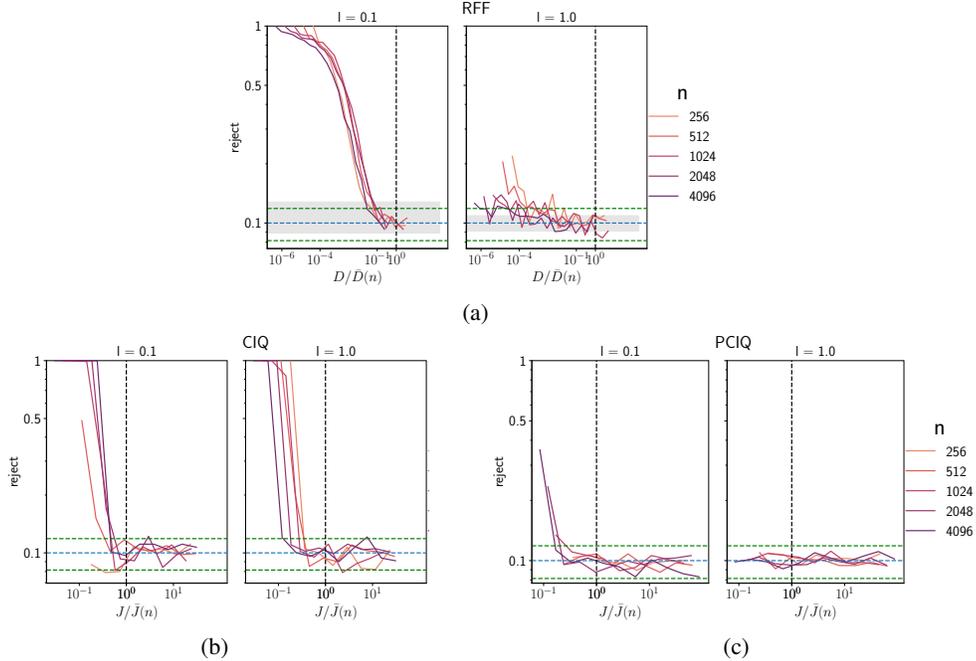

    \centering
    \subfloat[]{\label{main:a}\includegraphics[scale=0.3]{\detokenize{logreject-logD_byN_rff_1_2580211_rescaled}}}\\
    
    \begin{minipage}{.49\linewidth}
    \centering
    \subfloat[]{\label{main:b}\includegraphics[trim={0 0 3.55cm 0},clip,scale=0.3]{\detokenize{logreject-logD_byN_ciq_1_2703849_rescaled}}}
    \end{minipage}
    \begin{minipage}{.49\linewidth}
    \centering
    \subfloat[]{\label{main:c}\includegraphics[scale=0.3]{\detokenize{logreject-logD_byN_ciq_1_2686645_rescaled}}}
    \end{minipage}
    \caption{Rejection rate convergence with size of fidelity parameter. Significance level (\(\alpha\)) is shown by a blue dashed line and the 95\% CI around \(\alpha\) (for converged results) is in green. The range of results obtained from running a Cholesky benchmark is shown by the grey bar. The fidelity parameter is rescaled on the \(x\)-axis by the upper bound derived in the relevant section of this paper. Vertical black dashed line is at \(1.0\) indicating where we reach that bound. (a) shows the RFF case with no. RFF, D and \(\bar{D}(n)=n^2\log n\). (b) shows the convergence with Lanczos iterations \(J\) and \(\bar{J}(n)=\sqrt{n}\log n\). (c) is preconditioned CIQ with \(\bar{J}(n)=n^{3/8}\log n\).}
    \label{fig:results}
\end{figure}

\section{Discussion and conclusion}
    We show how to generate approximate samples from \emph{any} Gaussian Process that, with high probability, cannot be distinguished from a draw from the assumed GP. Bounds are derived to ensure that relevant approximation parameters are chosen to satisfy the requirements on the fidelity of the sample for arbitrary probabilistic bounds at a cost cheaper than a standard approach. We believe this work to be of use to researchers aiming to develop GP approximations for use on large datasets. For practical use, we generally suggest the use of CIQ over RFF or other common approaches on the basis of the strong theoretical guarantees we can provide, given some computational budget. We do note, however, that when memory is the main bottleneck RFF may be a preferable choice. We provide a table in \ref{appx:results} summarising the performance of different algorithms and discuss future work in \ref{appx:further}.

\pagebreak
\section*{Acknowledgments}
We would like to thank Hamza Alawiye for his original implementation of the RFF method; Nick Baskerville and Adam Lee for their advice on aspects of the GPytorch and CIQ code; IBM Research for supplying iCase funding for Anthony Stephenson; NCSC for contributing toward Robert Allison's funding and finally the reviewers for their constructive comments.

\medskip

\bibliographystyle{unsrt} 
\bibliography{references.bib}
\nocite{GP_book}

\appendix

\section{``Indistinguishable'' distributions}
\label{appx:indistinguishable}

      We adopt the definition of \emph{total variation distance} used in \cite{Romano2005}:
      \begin{definition}(Total Variation Distance (TV))
      \label{def:tvd}
          \eq{
              \TV(P_0, P_1) &= \half\norm{P_1-P_0}_1 = \half\int |p_1-p_0|d\mu
          }
          where $p_i$ is the density of $P_i$ with respect to any measure $\mu$ dominating both $P_0$ and $P_1$, i.e. $p_i =
          \frac{dP_i}{d\mu}$.
      \end{definition}
      Using this definition, \(\TV(P_0,P_1)\in [0,1]\) for any measures \(P_0,P_1\). Note that we use notation such that \(\TV(P_0,P_1)= \TV(p_0,p_1)\).

    The two proofs below draw heavily from the proof of Theorem 13.1.1 from \cite{Romano2005}:

\begin{proof}[Proof that the decision process of section \ref{sec:indistinguishable} achieves the minimum possible error rate]
\label{proof:min_err_rate}

    A general decision process for our problem is represented by a function \(\phi : \mathbb{R}^n\rightarrow \{0,1\}\) whereby, if we are presented with a sample vector \(y\ \in \mathbb{R}^n\), we select model \(P_{\phi(y)}\). Under this decision process
    \(\Pr(\mbox{error}) = \Pr(\mbox{select }P_1\given P_0)\Pr(P_0) + \Pr(\mbox{select }P_0\given P_1)\Pr(P_1)\). With an assumed uniform prior on models, we have
    \eq{
        2\Pr(\mbox{error}) &= \Pr(\mbox{select }P_1\given P_0) + \Pr(\mbox{select }P_0\given P_1) \\ 
        &= \int\phi(x)p_0d\mu(x) + \int(1-\phi(x))p_1d\mu(x) \\
        &= 1 + \int \phi(x)(p_0(x)-p_1(x))d\mu(x) \\
        &= 1 + \int_{R_+}\phi(x)f(x)d\mu(x) + \int_{R_-}\phi(x)f(x)d\mu(x) + \int_{R_0}\phi(x)f(x)d\mu(x)
    }
    where \(f(x)=p_0(x)-p_1(x)\), \(R_+=\{x \in \mathbb{R}^n : f(x) > 0\}\), \(R_-=\{x \in \mathbb{R}^n : f(x) < 0\}\) and \(R_0=\{x \in \mathbb{R}^n : f(x)=0\}\). From this it follows that setting \(\phi^\star\) to be 0 on \(R_+\), 1 on \(R_-\) and either 1 or 0 on \(R_0\) minimises the probability of error in the decision process. This choice of \(\phi\) precisely agrees with the decision process described in \cite{Romano2005}.

\end{proof}

\begin{proof}[Proof of \ref{lem:indistinguishable}]
    From the above proof we see that the probability of error achieved by the optimal decision process is \(p'=\half\left[1+\int_{R_-}(p_0(x)-p_1(x))d\mu(x)\right]\). If we interchange the roles of \(p_0\) and \(p_1\) we obtain the alternative representation \(p''=\half\left[1 + \int_{R_+}(p_1(x)-p_0(x))d\mu(x)\right]\). Since the probability of error, \(p\), satisfies \(p=p'=p''\) we have \(p=\half[p'+p'']\) which can be rearranged to give
    \eq{
    p &= \half\left[1 - \half\int\abs{p_1(x)-p_0(x)}d\mu(x)\right] = \half - \frac{1}{2}\TV(P_0,P_1).
    }
    Hence if we can set \(\TV(P_0,P_1)\leq 2\eps\) we obtain a maximin error rate of at least \(\half-\eps\) as claimed.
\end{proof}

\begin{remark}
        Note that we can consider the \emph{worst} decision rule, \(\phi^\dagger\), to do the exact opposite of \(\phi^\star\) and obtain that for any \(\phi\): \(\half - \frac{1}{2}\tau \leq p \leq \half + \frac{1}{2}\tau\) where \(\tau = \TV(P_0,P_1)\). \(p=\half\) only when \(\tau=0\). Additionally, a best-case rate of 0 (under \(\phi^\star\)) is achieved when \(P_0\) and \(P_1\) are maximally different in the sense that \(\tau=1\). In this regime we correspondingly obtain the worst-case rate (under \(\phi^\dagger\)) of 1.
\end{remark}

\subsection{Bounding the total variation distance}

      We could try to directly bound the total variation distance, for example, by using the work in \cite{Devroye2018}.
      However we make use of Pinsker's inequality \[\TV(P_0,P_1) \leq \sqrt{\half\KL(P_0,P_1)}\label{pinsker}\tag{\(\dagger\)}\] and bound the \(\KL\) instead.
      
      Rather than bound the KL-divergence between the marginal distributions (\ref{def:marg_kl}), we seek to bound the \emph{conditional} KL (\ref{def:cond_kl}), since for Gaussian distributions this collapses to the 2-norm of the difference of the sample vectors, \(f,\hat{f}\) (\ref{subsec:gauss}). Details justifying this are provided in \ref{subsec:cond_kl}.    
      

      We define the \emph{marginal} KL divergence as the usual KL divergence, to clearly distinguish from the
      \emph{conditional} KL we define below. Note that we assume \(\KL(Q,P)=\KL(q,p)\) where \(q,p\) are the densities of \(Q,P\) respectively w.r.t some dominating measure \(\mu\).
      \begin{definition}[Marginal Kullback-Leibler Divergence]
          \label{def:marg_kl}
          \eq{
              \KL(q,p) &= \int q(y)\log \frac{q(y)}{p(y)}dy. 
          }
      \end{definition}

      \subsection{Conditional $\KL$}
      \label{subsec:cond_kl}
          First, note that $f$ and $\hat{f}$ are correlated RVs that actually depend on an underlying RV $u\sim
              \normdist{0}{I_n}$. We then condition both the approximate and true distributions ($q$ and $p$) on $u$ rather than
          $f$ and $\hat{f}$. In particular, $f=K^\half u$, $\hat{f}=\hat{K}^\half u$.
          \begin{definition}[Conditional Kullback-Leibler Divergence]
              \label{def:cond_kl}
              \eq{
                  \KL(q,p\given u) &= \int q(y\given u)\log \frac{q(y\given u)}{p(y\given u)}dy. 
              }
          \end{definition}
          
        \begin{lemma}
          \label{lem:cond_kl}
              Using \ref{def:cond_kl},
              \eq{\E_u[\KL(q,p\given u)] &\geq \KL(q,p)}
         \end{lemma}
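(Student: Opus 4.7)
The plan is to exploit the fact that $u$ has the \emph{same} marginal distribution under both the true model $p$ and the approximating model $q$ (namely, $\mathcal{N}(0, I_n)$), since both sampling procedures start by drawing $u$ and then applying either $K^{1/2}$ or $\hat K^{1/2}$ (plus noise). This is what makes a conditioning argument on $u$ clean.

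First I would consider the joint distributions $q(y, u)$ and $p(y, u)$ and compute the KL divergence between them. Using $q(y,u) = q(y \mid u) q(u)$ and $p(y,u) = p(y \mid u) p(u)$ together with $q(u) = p(u)$, the ratio $q(y,u)/p(y,u)$ collapses to $q(y\mid u)/p(y\mid u)$, so that
\begin{equation*}
\KL(q(y,u), p(y,u)) = \int q(u) \int q(y\mid u) \log \frac{q(y\mid u)}{p(y\mid u)} \, dy \, du = \mathbb{E}_u[\KL(q, p \mid u)].
\end{equation*}

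Next I would invoke the chain rule for KL divergence in its other direction, decomposing the joint KL with the roles of $y$ and $u$ swapped:
\begin{equation*}
\KL(q(y,u), p(y,u)) = \KL(q(y), p(y)) + \mathbb{E}_{q(y)}\bigl[\KL(q(u\mid y), p(u\mid y))\bigr].
\end{equation*}
Since the conditional KL on the right is nonnegative (Gibbs' inequality), this immediately gives $\KL(q(y,u), p(y,u)) \geq \KL(q(y), p(y)) = \KL(q, p)$. Combining with the previous identity yields $\mathbb{E}_u[\KL(q, p \mid u)] \geq \KL(q, p)$, as required. Equivalently, one can read this as the standard data processing inequality applied to the deterministic map $(y, u) \mapsto y$.

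The argument is essentially bookkeeping with the chain rule, so there is no real technical obstacle; the only thing to be careful about is justifying $q(u) = p(u)$ (which holds by construction, since in both models $u$ is drawn from $\mathcal{N}(0, I_n)$ independently of the choice of $K^{1/2}$ versus $\hat K^{1/2}$) and ensuring measurability so the Fubini interchange in the first step is legitimate — both of which are immediate in the Gaussian setup we are working in.
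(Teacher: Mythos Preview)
Your argument is correct and follows essentially the same route as the paper: both proofs identify $\E_u[\KL(q,p\mid u)]$ with the joint KL $\KL(q(y,u),p(y,u))$ (using that $q(u)=p(u)=\pi(u)$), then apply the chain rule in the $(y,u\mid y)$ direction and drop the nonnegative conditional term. Your write-up is slightly cleaner in that you make the assumption $q(u)=p(u)$ explicit and note the data-processing interpretation; the paper additionally invokes Tonelli's theorem for the integral swap and the Mean Value Theorem in the final line, neither of which adds anything beyond what you already have.
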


          \begin{proof}[Proof of \ref{lem:cond_kl}]
              \eq{
                  \E_u[\KL(q,p\given u)] &= \int \pi(u)q(y\given u)\log \frac{q(y\given u)}{p(y\given u)}dydu \\
                  &= \int q(y,u)(\log q(y\given u) - \log p(y\given u))dydu \\
                  &= \int q(y,u)\left[\log\frac{q(y,u)}{\pi(u)} - \log\frac{p(y,u)}{\pi(u)}\right]dydu \\
                  &= \underbrace{\int q(y,u)\log\frac{q(y,u)}{p(y,u)}dydu}_{\KL(q(y,u),p(y,u))} \\
                  &=\int q(u\given y)q(y)\left[\log\frac{q(y)}{p(y)} + \log\frac{q(u\given y)}{p(u\given y)}\right]dydu\\
                  &= \underbrace{\int q(y)\log\frac{q(y)}{p(y)}dy}_{\KL(q,p)} + \int q(y)\underbrace{\int q(u\given
                      y)\log\frac{q(u\given y)}{p(u\given y)}du}_{\KL(q(u\given y), p(u\given y))}dy \\
                  &= \KL(q,p) + \underbrace{q(y')\KL(q(u\given y),p(u\given y))}_{\geq 0}.
              }
              where we have used Tonelli's theorem to reorder the integral, the non-negativity of the KL divergence and subsequently the Mean Value Theorem to complete the proof.
          \end{proof}

      \subsection{Gaussians}
      \label{subsec:gauss}
          Since we are specifically interested in samples from GPs, we are fortunate enough to be dealing with Gaussian
          distributions and, as such, we can write down a closed form for the KL-divergence between two Gaussian measures.

          For the marginal case, with $p(y) = \normdist[y]{0}{K_\xi}$ and $q(y) =
              \normdist[y]{0}{\hat{K}_\xi}$, we have
          \eq{
              \KL(q,p) &= \half\left\{\Tr K_\xi^{-1}\hat{K}_\xi - n
              + \log|K_\xi| - \log|\hat{K}_\xi|\right\}.
          }
          
          We now define \(E=\hat{K}_\xi - K_\xi\) and set $\Delta
         = K_\xi^{-1}E$. Note that $E$ is symmetric but not p.s.d. So we have:
        
        \begin{lemma}
         \label{lem:kl_frob}
         Using the definitions above,
         \eq{
         \KL(q,p) &\leq \frac{\norm{E}_F^2}{4\sigma_\xi^4}. 
         }
        \end{lemma}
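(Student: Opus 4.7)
The plan is to bound $\KL(q,p)$ by a second-order Taylor-remainder argument in a real interpolation parameter, rather than attempting an eigenvalue-by-eigenvalue bound of the form $\lambda - \log(1+\lambda) \leq \half \lambda^2$ on the eigenvalues of $\Delta$. The latter is tempting but breaks when $\hat{K}_\xi$ does not dominate $K_\xi$, because the pointwise inequality reverses for eigenvalues in $(-1,0)$, and $E = \hat K_\xi - K_\xi$ is indefinite in general (e.g.\ in the RFF setting where $\hat K$ has low rank). Working with the full scalar interpolating function sidesteps this.

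Define $\phi(t) = \half[t\Tr(\Delta) - \log|I + t\Delta|]$ on $t \in [0,1]$, so that $\phi(0)=0$ and $\phi(1) = \KL(q,p)$ by the Gaussian closed form above. One differentiation gives $\phi'(0)=0$, and using the identity $(I+t\Delta)^{-1} K_\xi^{-1} = (K_\xi + tE)^{-1}$ one obtains
\[
\phi''(t) = \half \Tr\!\left((K_\xi + tE)^{-1} E (K_\xi + tE)^{-1} E\right).
\]
Taylor's theorem with integral remainder then yields $\KL(q,p) = \int_0^1 (1-t)\,\phi''(t)\, dt$.

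The core estimate is a uniform bound on the trace in $\phi''(t)$. Because $K_\xi + tE = (1-t)K_\xi + t\hat{K}_\xi$ is a convex combination of two symmetric positive definite matrices that both dominate $\sigma_\xi^2 I$, we have $\norm{(K_\xi + tE)^{-1}}_2 \leq \sigma_\xi^{-2}$ for every $t \in [0,1]$. Setting $A_t = (K_\xi + tE)^{-1}$ and rewriting the trace as $\norm{A_t^{1/2} E A_t^{1/2}}_F^2$, the mixed-norm submultiplicativity $\norm{BCD}_F \leq \norm{B}_2 \norm{C}_F \norm{D}_2$ gives $\phi''(t) \leq \half \norm{E}_F^2/\sigma_\xi^4$ uniformly in $t$. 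Integrating $(1-t)$ over $[0,1]$ supplies a further factor of $\half$, producing $\KL(q,p) \leq \norm{E}_F^2 / (4\sigma_\xi^4)$.

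The main obstacle is the floor estimate $\norm{(K_\xi + tE)^{-1}}_2 \leq \sigma_\xi^{-2}$: it is essential that the convex combination preserves the noise-variance lower bound, which is the structural role played by $\sigma_\xi^2$ in the denominator. Once that is in hand the rest is a short Frobenius-norm computation, and the argument accommodates indefinite $E$ without any extra assumption on the sign of $\hat K_\xi - K_\xi$.
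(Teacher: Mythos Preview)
Your proof is correct and takes a genuinely different route from the paper's. The paper works eigenvalue-by-eigenvalue: it writes $\KL(q,p)=\half(\Tr\Delta-\log|I+\Delta|)$, applies the scalar inequality $\log(1+\lambda)\ge\lambda-\half\lambda^2$ to each eigenvalue of $\Delta$ to get $\KL\le\tfrac14\Tr\Delta^2$, and then bounds $\Tr\Delta^2\le\norm{K_\xi^{-1}}_2^2\norm{E}_F^2\le\sigma_\xi^{-4}\norm{E}_F^2$ via a short spectral computation. Your interpolation argument replaces the pointwise scalar step by an exact integral remainder and controls $\phi''(t)$ uniformly through the floor $K_\xi+tE=(1-t)K_\xi+t\hat K_\xi\succeq\sigma_\xi^2 I$. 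The payoff is precisely the one you flag: the inequality $\log(1+\lambda)\ge\lambda-\half\lambda^2$ fails on $(-1,0)$ (the difference has derivative $\lambda^2/(1+\lambda)\ge0$, hence is nonpositive left of $0$), so the paper's eigenvalue step is not justified when $\Delta$ has negative eigenvalues, which is the generic situation whenever $\hat K$ does not dominate $K$ (e.g.\ low-rank RFF). Your route handles indefinite $E$ without extra hypotheses, at the modest price of making explicit the structural assumption $\hat K_\xi\succeq\sigma_\xi^2 I$ that the paper is tacitly using anyway.
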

        
        \begin{proof} 
        \eq{ 
         \KL(q,p) &= \half\left\{\Tr\Delta - \log|I+\Delta|\right\} \\
         &\leq \frac{1}{4}\Tr\Delta^2 \\
         &\leq \frac{1}{4}\norm{K_\xi^{-1}}_2^2\norm{E}_F^2 \\
         &= \frac{1}{4}\lambda_n(K_\xi)^{-2}\norm{E}_F^2 \\
         &\leq \frac{\norm{E}_F^2}{4\sigma_\xi^4} } 
        where we have used that 
        \eq{ \log|I+\Delta| &=
         \sum_i\log(1+\lambda_i(\Delta)) \geq \sum_i\lambda_i(\Delta) - \half\lambda_i(\Delta)^2 \\
         &\geq \Tr\Delta - \half\Tr\Delta^2,
     }
     and 
     \eq{
     \Tr\Delta^2 &= \Tr[(K_\xi^{-\half}EK_\xi^{-\half})^2] \\
     &= \Tr[(\Lambda_\xi^{-\half}U^TEU\Lambda_\xi^{-\half})^2] \\
     &\leq \lambda_1(K_\xi^{-\half})^2\Tr[EU\Lambda_\xi^{-1}U^TE] \\
     &= \lambda_1(K_\xi^{-1})\Tr[\Lambda_\xi^{-1}U^TE^2U] \\
     &\leq \lambda_1(K_\xi^{-1})^2\Tr(E^2) \\
     &= \norm{K_\xi^{-1}}_2^2\norm{E}_F^2
     }
     with the eigendecomposition \(K_\xi=U\Lambda_\xi U^T\).
     
     
     \end{proof}

          For the conditional distributions $p(y\given f) = \normdist[y]{f}{(1-\eta)\sigma_\xi^2I_n}$ and
          $q(y\given \hat{f})=\normdist[y]{\hat{f}}{(1-\eta)\sigma_\xi^2 I_n}$, we get
        \eq{
            \label{eq:cond_kl_gauss}
            \KL(q,p\given f,\hat{f}) &= \half\left[\Tr I_n + (\hat{f}-f)^T\sigma_\xi^{-2}(1-\eta)^{-1}I_n(\hat{f}-f) - n
                \right] \\
            &= \frac{1}{2(1-\eta)\sigma_\xi^2}\norm{\hat{f}-f}_2^2 \tag{*}.} 
            
            We can then apply lemma \ref{lem:cond_kl} to demonstrate that, if we can find an
        upper bound on the 2-norm between true and approximate function evaluations on some $x$ data, we will be
        able to correspondingly bound the KL-divergence between the marginal distributions of the noise-corrupted
        samples and finally the TV via the inequality \eqref{pinsker}.

\section{Random Fourier Features}
\label{appx:rff}
\begin{algorithm}[H]
Let \(M\) denote memory usage for each line. \\
Define a rule to set a random seed for each \(j\) in \(1..D\) to ensure the same vector \(\omega_j\) is used for each row of \(X\). \\
\For{$i:n$}{
\begin{algorithmic}
    \State Sample \((x_i^{(1)},\dots,x_i^{(d)}) \sim \mathbb{P}_X\) \Comment{\(M=\order{d}\)}
    \State \(\hat{f}_i \gets 0\) 
    \end{algorithmic}
    \For{$j:D$}{
    \begin{algorithmic}
            \State Sample \((\omega_j^{(1)},\dots, \omega_j^{(d)}) \sim \mathbb{P}_\Omega\) \Comment{\(\order{1} \leq M \leq \order{d^2}\)}
            \State Compute \(z_j(x_i) = g(x_i^T\omega_j)\) \Comment{\(M=\order{1}\)}
            \State Sample \(w_j \sim \normdist{0}{1}\) \Comment{\(M=\order{1}\)}
            \State \(\hat{f}_i \gets \hat{f}_i + z_j(x_i)w_j\) \Comment{\(M=\order{1}\)}
    \end{algorithmic}
    }
}
\caption{Memory-efficient procedure to generate RFF samples.}
\label{alg:rff_sampling}
\end{algorithm}
Algorithm \ref{alg:rff_sampling} represents an extreme example of how we can trade-off sequential time complexity in the RFF procedure to produce an exceptionally memory-efficient method of at worst \(\order{d^2}\) (in the most general case where a \(d\times d\) matrix is required to sample from \(\mathbb{P}_\Omega\)). We would not recommend this algorithm for a practical implementation, but rather as an illustration of how far the principle can be pushed: massively distributing work amongst many processors and opting to write each sample to disk to avoid storing the full length \(n\) vector output. 

\label{appx:rff_proof}

    \begin{proof}[Proof of \ref{lem:rff}]
         Define the matrix of differences $E_{ij} = z(x_i)^Tz(x_j)-k(x_i,x_j)$.
         \eq{
         |E_{ij}|<\varepsilon/n &\implies \underbrace{\sum_{ij}E_{ij}^2}_{=\norm{E}_F^2}<\varepsilon^2\implies \norm{E}_F < \varepsilon
         }
         which implies
         \eq{
             \Pr\left(\abs{E_{ij}}<\frac{\varepsilon}{n} \quad \forall i,j\right) &= \Pr\left(\abs{E_{ij}}^2<\frac{\varepsilon^2}{n^2} \quad
             \forall i,j\right) \\
             &\leq \Pr\left(\sum_{ij}\abs{E_{ij}}^2<\sum_{ij}\frac{\varepsilon^2}{n^2}\right) \\
             &= \Pr\left(\sum_{ij}\abs{E_{ij}}^2< \varepsilon^2\right). }

         At a particular pair of locations $(x,x')$ which correspond to an element of the kernel matrix, we can use the
         bounds on the (vector) functions $z_j(x) = \sqrt{\frac{2}{D}}(\sin(\omega_j^Tx),\cos(\omega_j^Tx))^T$ (as suggested in \cite{Sutherland2015}) and Hoeffding's inequality to get a
         probabilistic tail bound on the error of an element of the approximate kernel matrix: \eq{ S_{D/2} &= \sum_{i=1}^{D/2}
             z_i(x)^Tz_i(x') - k(x,x') \\
             -2/D &\leq z_i(x)^Tz_i(x') \leq 2/D \\
             p = \Pr\left(|S_{D/2}|\geq \frac{\varepsilon}{n}\right) &\leq
             2\exp\left(\frac{-2\varepsilon^2}{n^2\sum_{i=1}^{D/2}(2/D--2/D)^2}\right) \\
             &= 2\exp(-D\varepsilon^2/4n^2) .}

         Now we apply the union bound, assuming that the locations are relatively uncorrelated. Note that $E$ is symmetric
         and we can ensure the diagonal elements are 0 so that we only need to bound $\half n(n-1)$ elements: \eq{ q =
             \Pr\left(\bigcup_{ij}\left\{\abs{E_{ij}}\geq \frac{\varepsilon}{n}\right\}\right) &\leq p\cdot \half n(n-1) =
             n(n-1)\exp(-D\varepsilon^2/4n^2) \\
             &\leq n^2\exp\left(-\frac{D\varepsilon^2}{4n^2}\right)\\
             \Pr\left(\abs{E_{ij}}<\frac{\varepsilon}{n} \quad \forall i,j\right) &= 1-q. }

         If we now state that we wish to choose a number of RFF $D$
         s.t. all elements of the error matrix are less than $\varepsilon/n$ with probability $1-\delta$ then we can
         rearrange the final expression above with $q=\delta$ to find:
         \eq{ D &\geq
             8\log\left(\frac{n}{\sqrt{\delta}}\right)\frac{n^2}{\varepsilon^2} \\
             &\implies \norm{E}_F^2 < \varepsilon^2 \\
             &\implies \KL <\frac{\varepsilon^2}{4\sigma_\xi^4} \quad (\ref{lem:kl_frob})\\
             &\implies \TV < \frac{\varepsilon}{\sqrt{8}\sigma_\xi^2} \quad \eqref{pinsker}.
         }
         To complete the proof we set \(\eps=\sqrt{8}\sigma_\xi^2\varepsilon\) so that \(\TV<\eps\).
     \end{proof}

\section{Contour Integral Quadrature}
\label{appx:ciq_summary}

\subsection{Summary of the CIQ method}
Here we give a brief summary of the CIQ method, as discussed in \cite{Davies2005}, \cite{Hale2008} and \cite{Pleiss2020}. We neglect much of the intricacies, which can be found in the aforementioned sources.
The CIQ method relies on a numerical quadrature approximation of the matrix version of Cauchy's integral theorem, for some square matrix \(A\):
\[f(A) = \frac{1}{2\pi\it{i}}\int_\Gamma f(z)(zI-A)^{-1}dz. \]
As usual in complex analysis \(\Gamma\) is a closed anticlockwise contour over which \(f\) is analytic. 

In our case, we want to use \(f(z)=z^\half\) so that
\[ A^\half = \frac{1}{2\pi\it{i}}\int_\Gamma z^\half(zI-A)^{-1}dz.\]
Section 4 of \cite{Hale2008} pays particular attention to this case and makes a change of variables \(w=z^\half\) to find
\eq{
A^\half &= \frac{A}{\pi\it{i}}\int_{\Gamma_w}(w^2I-A)^{-1}dw\\
&= \frac{\it{i}A}{\pi}\int_{-\it{i}\infty}^{\it{i}\infty}(w^2I-A)^{-1}dw
}

This expression is then approximated using a trapezoid rule with \(Q\) quadrature points:
\[\hat{A}_Q^\half = \frac{-2K'm^\half A}{\pi Q}\sum_{q=1}^Q(w(t_q)^2I-A)^{-1}\mathrm{cn}(t_q)\mathrm{dn}(t_q)
\]
where \(\mathrm{cn},\mathrm{dn}\) are Jacobi elliptic functions in standard notation.

To compute matrix-vector products, as we intend to, note that Cauchy's integral formula can be adapted straightforwardly to
\[ f(A)b = \frac{1}{2\pi\it{i}}\int_\Gamma f(z)(zI-A)^{-1}b dz.\]

Although these expressions are sufficient to calculate the desired approximations, we refer the reader to \cite{Pleiss2020} for significantly more detail on the practicalities of an efficient implementation.

 \label{appx:ciq_proof}

 \subsection{Proof of bounds for CIQ parameters (\ref{lem:ciq_bounds})}
 \begin{proof}

     From \cite{Pleiss2020} we get the following expression for the error when using CIQ to approximately sample from a
     multivariate normal:

     \numeq{
     \label{eq:epsJ}
         \varepsilon_J = \norm{a_J-K^\half u}_2 &\leq
         \underbrace{\order{\exp\left(-\frac{2Q\pi^2}{\log\kappa+3}\right)}}_{\varepsilon_Q} +
         \underbrace{\frac{2Q\log(5\sqrt{\kappa})\kappa\sqrt{\lambda_n}}{\pi}\left(\frac{\sqrt{\kappa}-1}{\sqrt{\kappa}+1}\right)^{J-1}}_{B(Q,J)}\norm{u}_2. 
     }
     Clearly the most important factor in the number of iterations, $J$, required for the algorithm to satisfy some
     prespecified degree of accuracy $\varepsilon_J$ for a sample of size $n$ depends on the condition number
     $\kappa$ of the kernel matrix. Thus we need a bound on the condition number, which we can find by the following
     argument.

     Using the spectral condition number $\kappa = \frac{\lambda_1}{\lambda_n}$, where we have ordered the
     eigenvalues such that $\lambda_1 \geq \lambda_2 \geq \dots \lambda_n$, we first bound the largest eigenvalue by
     noting that for a kernel $K$ with kernel-scale $\sigma_f^2=1$, $\Tr K = n$ and where $\Tr K = \sum_{i=1}^n\lambda_i$, we
     see that $\lambda_1 \leq n$. For a kernel with added uncorrelated noise of the form $K_{\eta\xi}=K+\eta\sigma_\xi^2 I_n$, we
     similarly see that $\lambda_1(K_{\eta\xi}) \leq n + \eta\sigma_\xi^2$. We include this latter case as it also allows us to bound
     the minimum eigenvalue as $\lambda_n(K_{\eta\xi}) \geq \eta\sigma_\xi^2$ and so
     \eq{
         \kappa(K_{\eta\xi}) &\leq \frac{n}{\eta\sigma_\xi^2} + 1.
     }

     Now, starting from the required bound on the TV-distance we can obtain lower bounds for the CIQ fidelity parameters \(Q,J\) to ensure that \(Q\) is such that \(\varepsilon_Q < \delta_Q\) for some free parameter \(\delta_Q\) and \(\TV(q,p)\leq\eps\):
     
     \eq{
     \TV(q,p) &\leq \sqrt{\half\KL(q,p)} \leq \frac{1}{\sqrt{2}}\sqrt{\E_u\left[\KL(q,p\given u)\right]} \\
     &= \half\sqrt{\E_u\left[\frac{1}{(1-\eta)\sigma_\xi^2}\norm{f-\hat{f}}^2_2\right]} \\
     &\leq \frac{1}{2\sqrt{1-\eta}\sigma_\xi}\sqrt{\E_u\left[\varepsilon_Q + B(Q,J)\norm{u}_2^2\right]} \\
     &= \frac{1}{2\sqrt{1-\eta}\sigma_\xi}\sqrt{\varepsilon_Q^2 + B(Q,J)^2\E_u\norm{u}_2^2 + \varepsilon_QB(Q,J)\E_u\norm{u}_2} \\
     &\leq \frac{1}{2\sqrt{1-\eta}\sigma_\xi}(\delta_Q + \sqrt{n}B(Q,J)) \leq \half\eps.
     }
     
     Where we have used Pinsker's inequality \eqref{pinsker}, lemma \ref{lem:cond_kl}, \eqref{eq:cond_kl_gauss}, \eqref{eq:epsJ} and that $u\sim \normdist{0}{I_n}$ and hence $\norm{u}_2\sim\chi_n$ (note not \(\chi^2\)). Thus we can explicitly find the mean (and variance) (\cite{chidist}) and hence a bound on \(\E_u\norm{u}_2\). Here we will assume $n$ is large and use the asymptotic expansion in \cite{Laforgia2012} to simplify the result:
        \eq{
             \E[\norm{u}_2] &= \sqrt{2}\frac{\Gamma(\half(n+1))}{\Gamma(n/2)} \\
             &= \sqrt{n}(1-\frac{1}{4n} + \order{n^{-2}}) \leq \sqrt{n}. 
         }
         
         From here we can write 
         \numeq{
         \frac{\eps\sigma_\xi\sqrt{1-\eta}-\delta_Q}{\sqrt{n}} &\geq B(Q,J) 
         \label{eq:BQJ}
         }
         and rearrange for \(J\) (once we have bounded \(Q\) in terms of our free parameter \(\delta_Q\)). Note that \eqref{eq:BQJ} determines an upper bound on \(\delta_Q\) to guarantee that the LHS is positive.
         
         In the next sections we make extensive use of the following relationships (where \(\kappa\leq 1+\zeta\) with \(\zeta \gg 1\)):
             \numeq{
             \label{eq:sqrtkappa}
             \sqrt{\kappa} \leq \sqrt{1+\zeta} = \sqrt{\zeta}(1+\zeta^{-1})^\half \leq \sqrt{\zeta} + \frac{1}{2\sqrt{\zeta}} 
             }
             and
             \numeq{
             \label{eq:logkappa}
              \log\kappa = \log\zeta + \log(1+\zeta^{-1}) \leq \log\zeta + \zeta^{-1}
             }
             to see that
             \numeq{
             \label{eq:sqrtkappalogkappa}
             \sqrt{\kappa}\log\kappa\leq \sqrt{\zeta}\log\zeta + \frac{1}{2\sqrt{\zeta}}\log\zeta + \frac{1}{\sqrt{\zeta}} + \frac{1}{2\zeta^{3/2}}.
             }
             Similarly,
             \eq{
             \log\log\frac{x}{y} &= \log(\log x - \log y) = \log\log x + \log\left(1-\frac{\log y}{\log x}\right) \\
             &=\log\log x -\frac{\log y}{\log x} + \order{(\log x)^{-2}}
             }
             giving
             \numeq{\label{eq:loglogxy1}
             \log\log\frac{x}{y} &= \log\log x + \order{(\log x)^{-1}}
             }
             provided \(x \gg y\), and
             \numeq{\label{eq:loglogxy2}
             \log(\log x+y) &= \log\log x +\order{(\log x)^{-1}}
             }
             if additionally \(y < \log x\).

     \subsubsection{Bounding the number of Quadrature Points}
         To ensure \(\varepsilon_Q\) does not exceed \(\delta_Q\) we will constrain \(Q\) as follows:
         \numeq{
         \label{eq:Qkappa}
             Q &\geq \left(\log\kappa + 3\right)(-\log\delta_Q)\frac{1}{2\pi^2}.
        }
        Using \eqref{eq:logkappa} with \(\zeta=\frac{n}{\eta\sigma_\xi^2}\) we can achieve a sufficient \(Q\) by requiring that
        \numeq{
             Q &\geq \frac{1}{2\pi^2}\left(\log\frac{n}{\eta\sigma_\xi^2} + 3 + \order{n^{-1}}\right)(-\log\delta_Q)
             \label{eq:Q}.
         }

     \subsubsection{Bounding the number of msMINRES Iterations}
         Taking \eqref{eq:epsJ} and \eqref{eq:BQJ} we rearrange in terms of $J$ to find
         \numeq{
         \label{eq:J1}
             J &\geq 1 + \frac{1}{\log(\sqrt{\kappa}-1) - \log(\sqrt{\kappa}+1)}\log\left\{\frac{\pi(\eps\sigma_\xi\sqrt{1-\eta} -
                 \delta_Q)}{2Q\sqrt{\lambda_n}\kappa\sqrt{n}(\log(5\sqrt{\kappa}))}\right\}.
         }
         We start by simplifying the prefactor (making use of Taylor expansions):
         \eq{
             \log(\sqrt{\kappa}-1) - \log(\sqrt{\kappa}+1) &= \log(1-1/\sqrt{\kappa}) - \log(1+1/\sqrt{\kappa}) \\
             &= -\frac{2}{\sqrt{\kappa}} - \order{\kappa^{-3/2}} \\
             (\log(\sqrt{\kappa}-1) - \log(\sqrt{\kappa}+1))^{-1} &= -\frac{\sqrt{\kappa}}{2}\left(1-\order{\kappa^{-1}}\right). 
         }
         
         Before we substitute our bound for the condition number, we first give a more general expression. To obtain it we define the RHS of \eqref{eq:Qkappa} to be \(\bar{Q}\) and that \(\log\bar{Q} \leq \log\log\kappa + \log(-\log\delta_Q)+\order{(\log\kappa)^{-1}}\) using \eqref{eq:loglogxy2}. Hence,
         \numeq{
            J &\geq 1 + \frac{\sqrt{\kappa}}{2}\left[\log(\kappa\sigma_\xi\sqrt{n}) + 2\log\log\kappa - \log(\eps\sigma_\xi\sqrt{1-\eta}-\delta_Q) + C\right]
            \label{eq:Jkappa}
         }
         where \(C\) is a pseudo-constant that contains constants, decaying functions of \(\kappa\) and similarly `negligible' terms (such as \(\log(-\log\delta_Q)\) and the small error term \(\abs{\sqrt{\lambda_n}-\sqrt{\eta}\sigma_\xi}\)). Note that the RHS here is larger than the RHS of \eqref{eq:J1} so that it is a more conservative bound.
         
         Now using our bounds for \(\kappa\), \(\sqrt{\kappa}\) \eqref{eq:sqrtkappa} and \(\log\kappa\) \eqref{eq:logkappa} (with \(\zeta=\frac{n}{\eta\sigma_\xi^2}\)) we see that (denoting \(\tilde{J}\) as the RHS of \eqref{eq:Jkappa})
         \eq{
         \tilde{J} &\leq 1 + \frac{\sqrt{n}}{2\sqrt{\eta}\sigma_\xi}\left\{\log\left(\left(\frac{n}{\eta\sigma_\xi^2}+1\right)\sigma_\xi\sqrt{n}\right) + 2\log\log\left(\frac{n}{\eta\sigma_\xi^2}+1\right) -\log(\pi(\eps\sigma_\xi\sqrt{1-\eta}-\delta_Q)) + C' \right\} \\
         &\leq 1 + \frac{\sqrt{n}}{2\sqrt{\eta}\sigma_\xi}\left\{\log\left(\frac{n^{3/2}}{\eta\sigma_\xi}\right) + 2\log\log\frac{n}{\eta\sigma_\xi^2} -\log(\pi(\eps\sigma_\xi\sqrt{1-\eta}-\delta_Q)) + C''\right\} \\
         &\leq 1 + \frac{\sqrt{n}}{2\sqrt{\eta}\sigma_\xi}\left\{\log n^{3/2} -\log(\pi(\eps\sigma_\xi\sqrt{1-\eta}-\delta_Q)) + 2\log\log n + C'''\right\} \\
         &= \ordtilde{\frac{\sqrt{n}}{\sqrt{\eta}\sigma_\xi}\log\frac{n}{\sigma_\xi(\eps\sigma_\xi\sqrt{1-\eta}-\delta_Q)}}
         }
         where we have again used \eqref{eq:logkappa} and \eqref{eq:loglogxy1} and absorbed the constant and decaying terms into the sequence of `constants' \(C',C'',C'''\).
         
         Having upper bounded \(\tilde{J}\), if we now use this as a lower bound for \(J\) then we have a sufficient condition to satisfy our TV requirement, i.e.
         \numeq{
          J & \geq \ordtilde{\frac{\sqrt{n}}{\sqrt{\eta}\sigma_\xi}\log\frac{n}{\sigma_\xi(\eps\sigma_\xi\sqrt{1-\eta}-\delta_Q)}}
         }
         where we use \(\ordtilde{\cdot}\) to mean ignoring \(\log\log\) terms.
         

\end{proof}

         
         

     \subsubsection{Preconditioning}

         \begin{proof}[Proof of \ref{lem:nystrom_precond}]
             If we take the rank-$k$ Nystr\"{o}m approximation ($\tilde{K}$) as our preconditioner, with cost $\order{Nk^2}$, then Corollary 4.10 of \cite{Shabat2019} shows:
             \numeq{
             \label{eq:cond1}
                 \tilde{\kappa} = \cond[(\tilde{K} + \eta\sigma_\xi^2 I)^{-1}(K+\eta\sigma_\xi^2 I)] &\leq 1 + \frac{2\lambda_{k+1}(K)\sqrt{4k(n-k)+1}}{\eta\sigma_\xi^2}.
             }
             (Henceforth we use \(\lambda_k\) to denote \(\lambda_k(K)\).) To satisfy our cost requirement we set $k=\lfloor{\sqrt{n}}\rfloor$ and since 
             \eq{
             (4\sqrt{n}(n-\sqrt{n}) + 1)^\half &= (1+4n^{3/2}(1-1/\sqrt{n}))^\half\\
             &= 2n^{3/4}(1-1/\sqrt{n})^\half\left(1+\frac{1}{8}n^{-3/2}(1-1/\sqrt{n})^{-1} + \order{n^{-3}}\right) \\
             &=2n^{3/4}\left(1-\frac{1}{2\sqrt{n}} - \frac{1}{8n} - \frac{1}{16n^{3/2}} + \order{n^{-2}}\right)\left(1+\frac{1}{8n^{3/2}} +
             \order{n^{-2}}\right) \\
             &\leq 2n^{3/4}
             }
             we have
             \numeq{
             \label{eq:cond2}
             \tilde{\kappa} &\leq 1+\frac{4\lambda_{k+1}}{\eta\sigma_\xi^2}n^{3/4}
             }
             which we write as \(\tilde{\kappa} \leq 1+\zeta\), as before, but now with \(\zeta=\frac{4\lambda_{k+1}}{\eta\sigma_\xi^2}n^{3/4}\).

             With this value of \(\zeta\) we again make use of \eqref{eq:sqrtkappa}, \eqref{eq:logkappa} and \eqref{eq:sqrtkappalogkappa} to show that
             \numeq{
             \label{eq:cond3}
                \sqrt{\tilde{\kappa}}\log\tilde{\kappa} &\leq \frac{2\sqrt{\lambda_{k+1}}}{\sqrt{\eta}\sigma_\xi}n^{3/8}\log\left(\frac{4\lambda_{k+1}}{\eta\sigma_\xi^2}n^{3/4}\right) + \order{n^{-3/8}\log n} .
             }
             
             Similarly, we have
             \numeq{
             \label{eq:cond4}
             \sqrt{\tilde{\kappa}} &\leq \sqrt{\zeta} + \half\zeta^{-\half} = \frac{2\sqrt{\lambda_{k+1}}}{\sqrt{\eta}\sigma_\xi}n^{3/8} + \order{n^{-3/8}}.
             }
             
             We finish the proof by rewriting \eqref{eq:Jkappa} in the form
             \eq{
             J &\geq 1 + \frac{\sqrt{\tilde{\kappa}}}{2}\left(\log(\tilde{\kappa}\sqrt{n}\sigma_\xi) - \log(\eps\sigma_\xi\sqrt{1-\eta}-\delta_Q) + \tilde{C}\right)
             }
             and inserting \eqref{eq:cond3} (updating \(\tilde{C}\rightarrow\tilde{C}'\) to absorb additional approximately negligible terms).
         \end{proof}

         
         \begin{proof}[Proof of \ref{lem:general_precond}]\label{proof:general_precond}
             \cite{Belkin2018} shows us that for sufficiently smooth radial kernels, the \(k^{th}\) matrix eigenvalue is given by (e.g. Gaussian, Cauchy) \eq{
             \lambda_k &\lesssim n\sqrt{\varphi}c_2\e^{-c_1k^{1/d}}
             }
             for $c_1,c_2>0$, $x \in \mathbb{R}^d$, $\varphi=\sup_{x\in\Omega}k(x,x)$.
             For us, $\varphi=\sigma_f^2$ and we set $k=\floor{\sqrt{n}}$.
             
             We can use this to see
             \eq{
             \sqrt{\lambda_{k+1}}n^{3/8} &\leq \sqrt{\lambda_k}n^{3/8} \leq \sqrt{c_2\sigma_f}n^{7/8}\e^{-\frac{c_1}{2}n^{1/d}} \leq \sqrt{c_2\sigma_f}n^{7/8}
             }
             and hence use this with lemma \ref{lem:nystrom_precond} (for \ref{itm:modn})
             \eq{
             J &\geq 1 + \frac{\sqrt{c_2\sigma_f}}{\sqrt{\eta}\sigma_\xi}n^{7/8}\left(\frac{5}{4}\log n - \log(\eps\sigma_\xi\sqrt{1-\eta}-\delta_Q) + \tilde{C}'\right) \\
             &= 1 + \order{n^{7/8}\log n}.
             }

             For modest \(n\) such that \(0<\frac{3}{8}\log n - \frac{c_1}{2}n^\frac{1}{d} < 1\) (\ref{itm:bign}),
             \eq{
             \exp\left[\frac{7}{8}\log n - \frac{c_1}{2}n^\frac{1}{d}\right] &\leq 1 + \frac{7}{8}\log n 
             }
             which we combine with \ref{lem:nystrom_precond} to obtain 
             \eq{
             J &\geq 1 + \frac{\sqrt{c_2\sigma_f}}{\sqrt{\eta}\sigma_\xi}\left[\frac{31}{32}(\log n)^2 + \frac{5}{4}\log n - \log(\eps\sigma_\xi\sqrt{1-\eta} - \delta_Q) + \tilde{C}''\right] \\
             &= \order{(\log n)^2}
             }
             where we have again absorbed small constant, decaying and \(\log\log\) terms into \(\tilde{C}''\).
             
             At sufficiently large \(n\) (\ref{itm:asympn}), we can write
             \eq{
             \sqrt{\lambda_k}n^{3/8}\log n &\leq \sqrt{c_2\sigma_f}\exp\left[\frac{7}{8}\log n + \log\log n - \frac{c_1}{2}n^\frac{1}{d}\right] \\
             & \leq \sqrt{c_2\sigma_f}
             }
             since for all finite $d$ the $n^\frac{1}{d}$ term grows faster in $n$ than $\log n$ (and definitely \(\log\log n\)), the exponent will become negative at large $n$. This gives us
             \[
                J \geq 1 + \order{1}.
             \]
         \end{proof}
         
        \begin{remark}
         Note that the constant terms hidden in the \(\order{\cdot}\) notation are largely consistent for all \(\gamma\) and generally less than \(\order{\sigma_\xi^{-1}}\).
         \end{remark}
        
        \begin{remark}
                For the special case of the RBF kernel we can exploit the precise expressions for the eigenvalues to obtain more specific bounds on \(J\), but we skip these details here.
        \end{remark}

\section{Conjugate Gradients (CG)}
\label{appx:cg}
In addition to the sampling methods described in this paper we would also like to acknowledge that the conjugate gradients algorithm can also be adapted to facilitate approximate sampling as, for example, in \cite{Parker2012}. Since the computational complexity of such a procedure is in general \(\order{n^2k}\) where \(k\) is the number of iterations employed by the conjugate gradient algorithm, in order for this to be competitive, we require \(k\leq\order{\sqrt{n}\log n}\). Within the time-frame of this paper, we have been unable to investigate this option fully, but we believe that in cases where the Gram matrix can be well-approximated by a low-rank matrix (e.g. at large lengthscales), CG sampling is likely to be a promising approach.

\section{Summary of algorithms}
\label{appx:results}
    \begin{table}[h!]
        \centering
        \begin{tabular}{lll}
        \hline
        \textbf{Method} & \textbf{Time} & \textbf{Space}\\
           \hline
                Cholesky    &   \(\order{n^3}\) & \(\order{n^2}\) \\
                RFF         &   \(\order{n^3\log n}\) & \(\order{n}\)\\
                CIQ         &   \(\order{n^{5/2}\log n}\) & \(\order{n\log n}\) \\
                PCIQ         &   \(\order{n^{2.375}\log n}\) & \(\order{n\log n}\) \\
            \hline
        \end{tabular}
        \caption{Time and space complexity of competing methods of generating draws from a GP. P=with preconditioning. \label{appx:table}}
    \end{table} 
    
In addition to the algorithms in table \ref{appx:table} we point out that the RFF method is highly parallelisable and the `wall-clock' time could be significantly reduced from that listed, with a large enough supply of processors; but it is beyond the scope of this paper to delve into the details of such an implementation. Finally, the PCIQ entry does not include the gains observed as \(n\) becomes very large (as outlined in \ref{lem:general_precond}). 

\section{Implementation}
\label{appx:implementation}
    We implemented the RFF sampling procedure in NumPy and made use of the GPyTorch\footnote{\url{github.com/cornellius-gp/gpytorch/}} (\cite{Gardner2018}) library to facilitate CIQ. We wrapped a SciPy implementation of interpolative decomposition to integrate with GPyTorch for preconditioning with the CIQ method. These will be made available on our GitHub (\url{github.com/ant-stephenson/gpsampler}) and can be installed as a Python library.
    
    To run our experiments we made use of the HPC system BluePebble at the University of Bristol, using nodes with 12 CPUs with 15GB of memory each and allocating a maximum of 200 hours per experiment. This was sufficient for our purposes and this much parallel compute was only necessary to facillitate the running of 1000 repeats per experiment.

\section{Experiments}
\label{appx:experiments}
    We present results from some empirical experiments running hypothesis tests in section \ref{sec:experiments}. We chose to use hypothesis testing rather than directly implement the Bayesian decision procedure to avoid considerable coding effort and compute resource whilst still demonstrating our main points.
    
    A more complete description of the method used is the following:
    
    \begin{algorithm}[H]
    \KwIn{A set of adjustable experiment parameters \(\theta\); \(N\) the number of repeat experiments per parameter set.}
    \KwOut{Hypothesis test rejection rate, \(r\).}
    \begin{algorithmic}
         \State \(r\gets 0\) 
    \end{algorithmic}
         \For{$i:N$}{
            \begin{algorithmic}[1]
                 \State Generate sample \(\hat{y}\) of length \(n\) using method \(M(\theta)\). 
                 \State Whiten the sample using a Cholesky decomposition of the true kernel matrix, i.e. \(\hat{z}=L^{-1}\hat{y}\) for \(K_\xi=LL^T\).
                 \State Run a (Cramér-von Mises) hypothesis test to determine whether the whitened sample \(\hat{z}\) is consistent with an i.i.d draw from a standard normal distribution. Record the test result \(t \in \{0,1\}\) at a predetermined significance level \(\alpha\).
                 \State \(r \gets r + t\)
            \end{algorithmic}
            }
            \begin{algorithmic}
            \State \(r \gets r/N\)
            \end{algorithmic}
        
         \caption{Procedure used to test sample quality.}
         \label{alg:hyp_test}
     \end{algorithm}
     
     In addition to figure \ref{fig:results}, figure \ref{fig:results_more} shows the results of further experiments run at larger lengthscales to assess how performance of the CIQ method degrades as condition number becomes more extreme. We note that apart from an increase in variance (which is expected), the results appear to be quite stable, oscillating around the blue line and mostly contained between the green lines. The blue line represents the rejection rate we expect under the null hypothesis (that the distributions are the same) and is thus the rate we expect to achieve at convergence. The green lines are given by the 95\% confidence intervals for a large-sample of Bernoulli trials at the converged rate.
    \begin{figure}
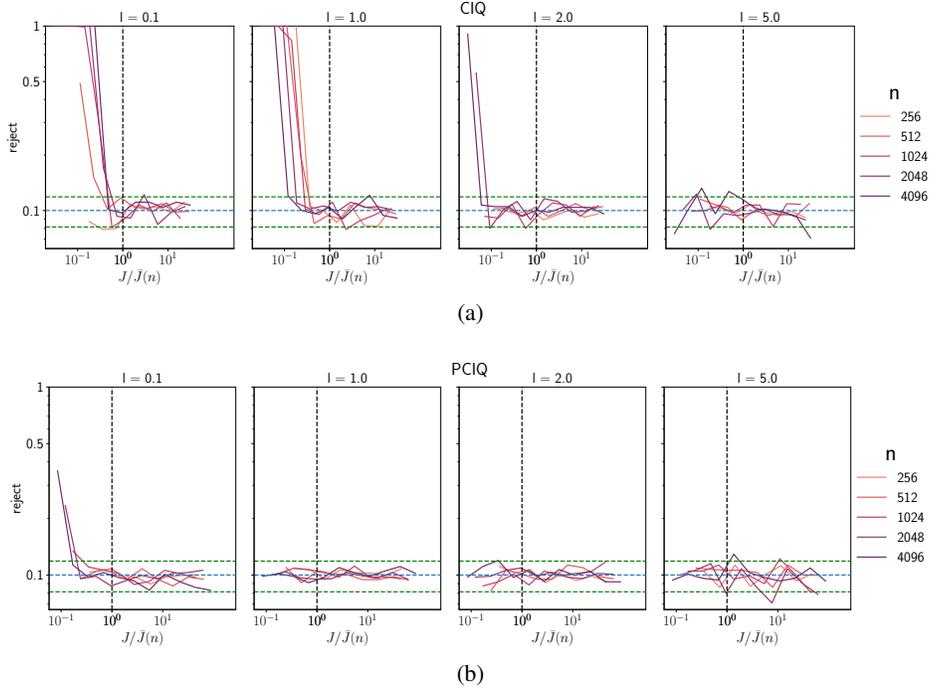

        \centering
        \subfloat[]{\label{appx_main:b}\includegraphics[clip,scale=0.3]{\detokenize{logreject-logD_byN_ciq_1_2703849_rescaled_3}}}\\
        \centering
        \subfloat[]{\label{appx_main:c}\includegraphics[scale=0.3]{\detokenize{logreject-logD_byN_ciq_1_2686645_rescaled_2}}}
        \caption{Rejection rate convergence with size of fidelity parameter as before, with additional plots at more extreme lengthscales.}
        \label{fig:results_more}
\end{figure}
    
    
    On the results themselves, we note that the PCIQ method appears to converge more slowly for \(l=0.1\) than for \(l=1.0\), an initially surprising result. At first consideration, we expect the kernel matrix for the former to be closer to the identity and thus have a smaller condition number. This is true, but conversely, the effectiveness of the preconditioning step is hindered by the fact that at small lengthscales the matrix will also be full rank and thus to well-approximate the inverse we are likely to need a higher rank approximation. To test this hypothesis we ran a simulation of our (rank-\(\sqrt{n}\)) preconditioner acting on a series of random kernel matrices with varying lengthscales. Figure \ref{fig:precond_effect} shows the results from this simulation from which can be seen an apparent peak in the vicinity of \(l=0.1\), in particular for the \(n \in \{2000,5000\}\) cases, which replicates our previous findings. It can be shown that the worst-case lengthscale gets smaller as \(n\) increases, tending to 0 in the limit as \(n\rightarrow\infty\). 
    
        \begin{figure}
        \centering
        \includegraphics[scale=0.5]{\detokenize{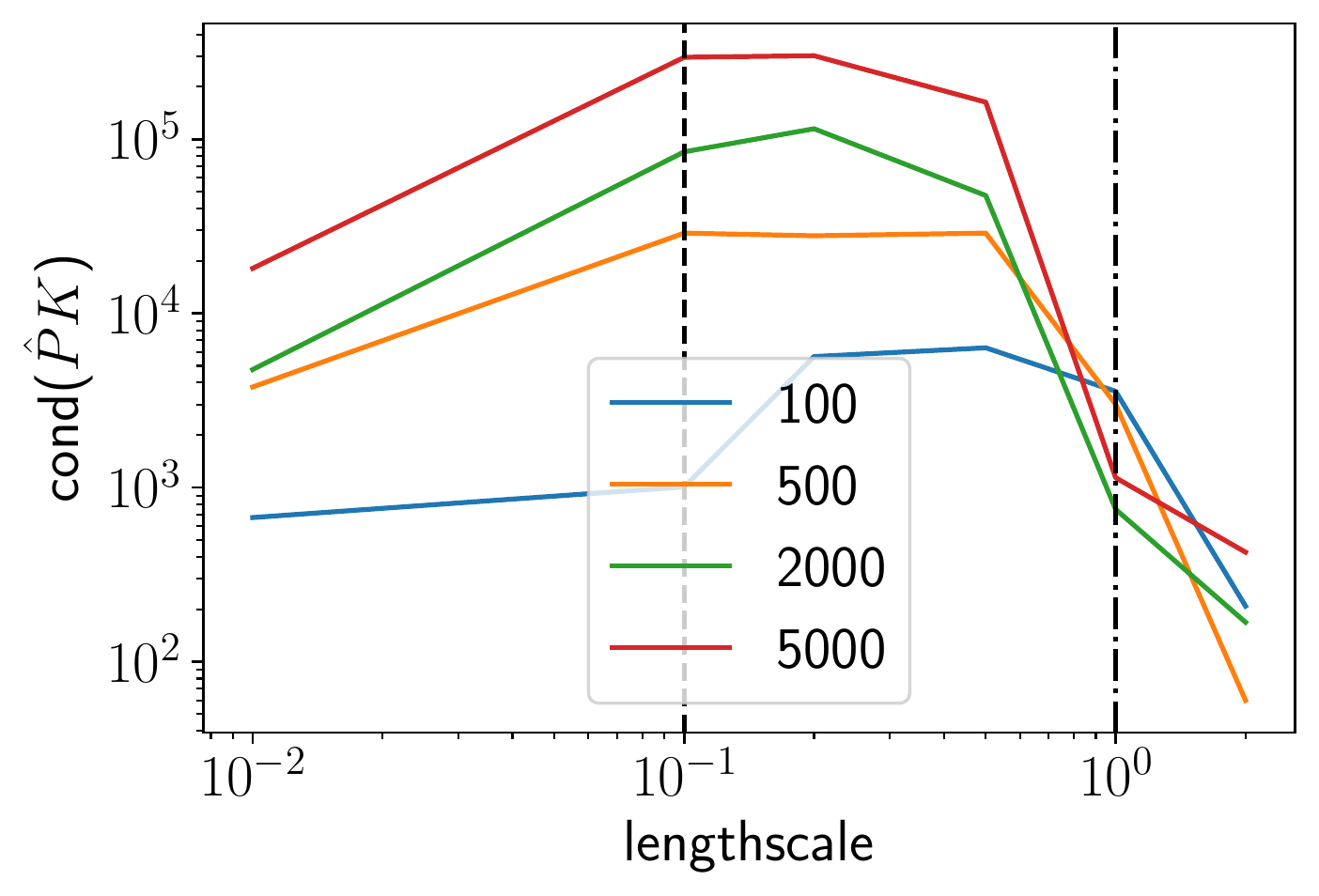}}
        \caption{Preconditioner effectiveness as a function of lengthscale. Different colour lines represent different sample sizes. For this simulation we used parameter values consistent with our previous experiments: \((d, \sigma_\xi^2,\sigma_f^2)=(2, 0.001,1.0)\). The (first) black dashed line is at \(l=0.1\) and the second dot-dash line is at \(l=1.0\). \(\hat{P}\) represents the preconditioner approximation to the kernel matrix inverse.}
        \label{fig:precond_effect}
\end{figure}

\section{Further work}
\label{appx:further}
As noted in \ref{sec:results}, we believe the bounds on at least the RFF method can be improved upon. Additionally, more extensive experiments over a wider hyperparameter space and more general kernel functions, as well as utilising the Bayesian decision process outlined in the paper, would provide more thorough support to the arguments. We also acknowledge the vast amount of literature dedicated to efficient implementations of various linear algebra routines (e.g. SVD and Cholesky) that could be utilised for our purposes, albeit with considerable effort to derive similar theoretical guarantees.

In running the experiments we made extensive use of the GPyTorch machinery, pushing it beyond its intended use; we wish to acknowledge that our application of CIQ is a `misuse' of the GPyTorch implementation as it was originally designed. As a result, we believe it would be beneficial to adjust it to be more compatible with this application, in order to facilitate further adoption of synthetic data for GP evaluation.

\end{document}